\colorlet{texcscolor}{blue!50!black}
\colorlet{texemcolor}{red!70!black}
\colorlet{texpreamble}{red!70!black}
\colorlet{codebackground}{black!25!white!25}
\title{Generative Models with ELBOs Converging to Entropy Sums}
\author{
\ \\[7mm]
Jan Warnken$^{1,}$\thanks{joint major contributions}\hspace{3.3mm} Dmytro Velychko$^1$\hspace{2mm} Simon Damm$^2$\\[1.0ex]
Asja Fischer$^2$\hspace{2mm} J\"org L\"ucke$^{1,}$\footnotemark[1]\\[9.5mm]
$^1$\small{Machine Learning, Faculty VI, University of Oldenburg, Germany}\\
$^2$\small{Machine Learning, Department of Computer Science, Ruhr University Bochum, Germany}
}
\date{}
\definecolor{white}{rgb}{1.0,1.0,1.0}
\definecolor{brightred}{rgb}{1.0,0.1,0.1}
\definecolor{brightblue}{rgb}{0.0,0.0,0.8}
\definecolor{darkblue}{rgb}{0.0,0.0,0.5}
\definecolor{darkgreen}{rgb}{0.0,0.3,0.0}
\definecolor{brightgreen}{rgb}{0.0,0.8,0.0}
\definecolor{darkblack}{rgb}{0.0,0.0,0.0}
\definecolor{grey}{rgb}{0.3,0.3,0.3}
\newtheorem{proposition}{Proposition}
\theoremstyle{definition}
\newtheorem{definition}{Definition}
\definecolor{white}{rgb}{1.0,1.0,1.0}
\definecolor{brightred}{rgb}{1.0,0.1,0.1}
\definecolor{brightblue}{rgb}{0.0,0.0,0.8}
\definecolor{darkblue}{rgb}{0.0,0.0,0.5}
\definecolor{darkgreen}{rgb}{0.0,0.3,0.0}
\definecolor{brightgreen}{rgb}{0.0,0.8,0.0}
\definecolor{darkblack}{rgb}{0.0,0.0,0.0}
\definecolor{grey}{rgb}{0.3,0.3,0.3}
\long\def\comment#1{}  
\newcommand{\disT}{\textstyle}
\newcommand{\disS}{\displaystyle}
\DeclareMathOperator{\diag}{diag}
\newcommand{\undersets}[2]{\underset{\hbox to 0pt{\hss{\scriptsize #1}\hss}}{#2}}
\newcommand{\oversets}[2]{\overset{\hbox to 0pt{\hss{\scriptsize #1}\hss}}{#2}}
\newcommand{\FF}{{\mathcal F}}
\newcommand{\FFt}{\tilde{\FF}}
\newcommand{\HH}{{\mathcal H}}
\newcommand{\HHt}{\tilde{\HH}}
\newcommand{\LL}{{\mathcal L}}
\newcommand{\NN}{{\mathcal N}}
\newcommand{\RRR}{\mathbbm{R}}
\newcommand{\Scal}{{\mathcal S}}  
\newcommand{\Jnew}{\frac{\partial\etaVec(\zVec;\ThetaVec)}{\partial\thetaVec^{\mathrm{\,T}}}}
\newcommand{\Inew}{\frac{\partial\zetaVec(\PsiVec)}{\partial\PsiVec^{\mathrm{\,T}}}}
\newcommand{\sit}{\tilde{\sigma}}
\newcommand{\EEE}[2]{\mathbbm{E}_{#1}\big\{#2\big\}}
\def\Vhrulefill{\leavevmode\leaders\hrule height 0.7ex depth \dimexpr0.4pt-0.7ex\hfill\kern0pt}
\newcommand{\One}{\hbox{$1\hskip -1.2pt\vrule depth 0pt height 1.6ex width 0.7pt\vrule depth 0pt height 0.3pt width 0.12em$}}
\newcommand{\OneDD}{\One_{_{D\times{}D}}}
\newcommand{\ZeroLL}{0_{_{L\times{}L}}}
\newcommand{\muVec}{\vec{\mu}}
\newcommand{\bVec}{\vec{b}}
\newcommand{\qn}{q^{(n)}}
\newcommand{\qBar}{\overline{q}}
\newcommand{\qbar}{\overline{q}}
\newcommand{\classk}{k}
\newcommand{\subc}{c}
\newcommand{\xVec}{\vec{x}}
\newcommand{\zVec}{\vec{z}}
\newcommand{\xVecN}{\vec{x}^{\hspace{0.5pt}(n)}}
\newcommand{\aVec}{\vec{a}}
\newcommand{\wVec}{\vec{w}}
\newcommand{\piVec}{\vec{\pi}}
\newcommand{\lambdaVec}{\vec{\lambda}}
\newcommand{\PsiVec}{\vec{\Psi}}
\newcommand{\PsiVecT}{\PsiVec^{\mathrm{\,T}}}
\newcommand{\pT}{p_{\footnotesize{\ThetaVec}}}
\newcommand{\pPsi}{p_{\footnotesize{\PsiVec}}}
\newcommand{\qPhiN}{q_{\Phi}^{(n)}}
\newcommand{\zetaVec}{\vec{\zeta}}
\newcommand{\zetaVecT}{\vec{\zeta}^{\mathrm{\,T}}}
\NewDocumentCommand{\y}{O{}O{}}{y_{#2}^{\,#1}{}}
\NewDocumentCommand{\yVec}{O{}}{\vec{y}_{\vphantom{c}}^{\,#1}{}}
\NewDocumentCommand{\Wgen}{O{}O{}}{\mathcal{W}_{#1#2}}
\NewDocumentCommand{\Rgen}{O{}O{}}{\mathcal{R}_{#1#2}}
\NewDocumentCommand{\W}{O{}O{}}{W_{#1#2}}
\NewDocumentCommand{\R}{O{}O{}}{R_{#1#2}}
\NewDocumentCommand{\Sc}{O{\subc}O{}}{s_{#1}^{#2}}
\NewDocumentCommand{\sVec}{O{}}{\vec{s}_{\vphantom{\subc}}^{\,#1}}
\NewDocumentCommand{\Igenc}{O{\subc}O{}}{\mathcal{I}_{#1}^{#2}}
\NewDocumentCommand{\Ic}{O{\subc}O{}}{I_{#1}^{#2}}
\NewDocumentCommand{\tk}{O{\classk}O{}}{t_{#1}^{#2}}
\NewDocumentCommand{\tVec}{O{}}{\vec{t}_{\vphantom{\classk}}^{\,#1}}
\NewDocumentCommand{\eps}{O{}}{\epsilon_{\textnormal{\tiny $#1$}}}
\NewDocumentCommand{\epst}{O{}}{\tilde{\epsilon}_{\textnormal{\tiny $#1$}}}
\newcommand{\Bern}{\textrm{Bern}}
\newcommand{\etaVec}{\vec{\eta}}
\newcommand{\etaVecT}{\vec{\eta}^{\mathrm{T}}}
\newcommand{\TVec}{\vec{T}}
\newcommand{\ThetaVec}{\vec{\Theta}}
\newcommand{\ThetaVecT}{{\ThetaVec}^{\mathrm{\,T}}}
\newcommand{\thetaVec}{\vec{\theta}}
\newcommand{\thetaVecT}{\vec{\theta}^{\mathrm{\,T}}}
\newcommand{\alphaVec}{\vec{\alpha}}
\newcommand{\betaVec}{\vec{\beta}}
\newcommand{\delll}[2]{\frac{\partial{}#1}{\partial{}#2}}
\newcommand{\del}[1]{\frac{\partial}{\partial{}#1}}
\begin{document}

\maketitle

\begin{abstract}
\noindent The evidence lower bound (ELBO) is one of the most central objectives for probabilistic unsupervised learning. For the ELBOs of several generative models and model classes, we here prove convergence to entropy sums. As one result, we provide a list
of generative models for which entropy convergence has been shown, so far, along with the corresponding expressions for entropy sums. 
Our considerations include very prominent generative models such as probabilistic PCA, sigmoid belief nets or
Gaussian mixture models. 
%
%
%
However, we treat more models and entire model classes such as general mixtures of exponential family distributions.
Our main contributions are the proofs for the individual models.
For each given model we show that the conditions stated in Theorem~1 or Theorem~2 of \cite{LuckeWarnken2024} are fulfilled such
that by virtue of the theorems the given model's ELBO is equal to an entropy sum at all stationary points.
The equality of the ELBO at stationary points applies under realistic conditions: for finite numbers of data points, for model/data mismatches, at any stationary point including saddle points etc, and it applies for any well behaved family of variational distributions. 
\end{abstract}
%
%
%
\section{Introduction}\label{SecIntro}
We consider probabilistic generative models that are defined based on an elementary directed graphical model with two sets of variables.
We use the same notation as in the paper of \citet{LuckeWarnken2024}, which provides the main theoretical tools that we here apply to concrete models:
we use $\xVec$ for the set of observables, and we use $\zVec$ for the set of latent variables. A generative model is then given by
\begin{eqnarray}
 \zVec \,\sim\, p_{\Theta}(\zVec)\,\ \ \mbox{\ and\ }\ \ \xVec \,\sim\, p_{\Theta}(\xVec\,|\,\zVec)\,, \label{EqnGenModelIntro}
\end{eqnarray}
where $\Theta$ denotes the set of model parameters. We refer to the distributions $p_{\Theta}(\zVec)$ and $p_{\Theta}(\xVec\,|\,\zVec)$
as {\em prior distribution} and  {\em observable distribution}, respectively. 
Only for mixture models we slightly divert from the notation above and use $c$ for the latent class variables instead of $\zVec$ (but this will be clear from context).

\noindent{}For $N$ data points $\xVecN$ the ELBO (or free energy) learning objective \citep{JordanEtAl1999,NealHinton1998} is given by:
\begin{align}
 \FF(\Phi,\Theta) 
%
%
%
=\frac{1}{N}\sum_{n} \int \qn_{\Phi}\!(\zVec)\, \log\!\big( p_{\Theta}(\xVecN\,|\,\zVec) \big)\hspace{0.5ex} \mathrm{d}\zVec
                 \,-\, \frac{1}{N}\sum_{n} D_{\mathrm{KL}}\big[ \qn_{\Phi}\!(\zVec),p_{\Theta}(\zVec) \big]\,, \label{EqnFFIntro}
%
\end{align}
where $\qn_{\Phi}(\zVec)$ are the {\em variational distributions} used for optimization.

\noindent{}For each here considered generative model (or for each considered model class), we will show that the ELBO (Eqn.\,\ref{EqnFFIntro}) is at all stationary points
of learning equal to (see \cite{LuckeWarnken2024} for details):
\begin{align}
%
\ \FF(\Phi,\Theta) &=\frac{1}{N}\sum_{n} \HH[\qPhiN(\zVec)]  \,-\, \HH[\,p_{\Theta}(\zVec)] \,-\, \EEE{\qbar_{\Phi}}{ \HH[\,p_{\Theta}(\xVec\,|\,\zVec)] }\,,\phantom{\small{}ix}
%
%
\label{EqnTheoremIntro}\\
%
%
%
%
%
%
%
%
%
&\mbox{where\ \ } \qbar_{\Phi}(\zVec) := \frac{1}{N} \sum_{n} \qn_{\Phi}(\zVec) \label{EqnAggPosterior}
\end{align}
is the average variational distribution. The distribution $\qbar_{\Phi}(\zVec)$ is also sometimes referred to as {\em aggregate posterior} (e.g.\,\cite{MakhzaniEtAl2015,TomczakWelling2018,AnejaEtAl2021}) but we note that it can (as an average of variational approximations) significantly divert from the average posterior.
Notably, expression (\ref{EqnTheoremIntro}) will, for each here treated generative model, apply under realistic conditions, i.e., the result will apply for finite numbers of data points, model/data mismatches, at any stationary point (global and local maxima as well as saddle points), and for any (well-behaved) family of variational distributions.

%
%
In the literature on deep generative models, the variational distributions are typically amortized, i.e., the variational distributions $\qn_{\Phi}(\zVec)$ are obtained via a function from the set of observables to latents: $\qn_{\Phi}(\zVec)=q_{\Phi}(\zVec; \xVecN)$. Furthermore, the avarage over data points is often causually abbreviated as an expectation value w.r.t.\,the true data distribution $p(\xVec)$. If the entropy sum of Eqn.\,\ref{EqnTheoremIntro} is denoted analogously, one therefore obtains the expression:
%
%
\begin{align}
%
\ \FF(\Phi,\Theta) = \EEE{p}{  \HH[q_{\Phi}(\zVec; \xVec)] } \,-\, \HH[\,p_{\Theta}(\zVec)] \,-\, \EEE{\qbar_{\Phi}}{ \HH[\,p_{\Theta}(\xVec\,|\,\zVec)] }\,.\phantom{\small{}ix}
%
%
\label{EqnTheoremIntroLimit}
%
%
%
%
%
%
\end{align}
We will consider the case where $N$ can take on any value, i.e., expression~(\ref{EqnTheoremIntro}) applies, or expression~(\ref{EqnTheoremIntroLimit}) applies, with the first term interpreted as abbreviation for the average over $N$ data points. But our results will also apply in the limit of $N\rightarrow\infty$ such that expression (\ref{EqnTheoremIntroLimit}) is also valid with the first term being an actual expectation value. In the limit, the aggregate posterior $\qbar_{\Phi}(\zVec)$ is the distribution obtained from (\ref{EqnAggPosterior}) when $N\rightarrow\infty$ (assuming that all limits exist). 

To show that the ELBO takes on the form (\ref{EqnTheoremIntro}), we will apply Theorem~1 or Theorem~2 of \citet{LuckeWarnken2024}. In virtue of the theorems,
equality of ELBO and entropy sums can be shown for a given model by verifying that the assumptions of one of the theorems are satisfied.
The corresponding theorem then implies that (\ref{EqnTheoremIntro}) holds. One main assumption that has to be fulfilled can usually directly be verified by considering
the definition of a generative model: prior distribution and observable distribution have to be exponential family distributions. We refer to such a model as exponential family generative model or {\em EF generative model}. The only non-trivial conditions for
Theorem~1 or Theorem~2 of \citet{LuckeWarnken2024} to be applicable is a parameterization condition. The reader only interested in using the entropy sum expression for the ELBO is not required to study the condition, however. Instead, it is sufficient to observe that entropy sum convergence can be shown for the generative model one may be interested in. Different generative models are listed in Secs.~\ref{SecSBN} to~\ref{SecMixtures}. For completeness and for the proofs for the treated models, we here reiterate the parameterization condition. 

Consider a generative model as specified in Eqn.~\ref{EqnGenModelIntro}. We take the prior distribution to be parameterized by parameters $\vec{\Psi}$, and we assume the observable distribution to be parameterized by another set of parameters $\vec{\Theta}$.
However, we also account for generative models, such as probabilistic PCA (\mbox{p-PCA;} \cite{Roweis1998,TippingBishop1999}), that do not involve a parameterized prior.
For both sets of parameters we arranged the scalar parameters into one column vector. The set of all parameters of the generative model we will
denote by $\Theta=(\vec{\Psi},\vec{\Theta})$ (i.e., the symbol $\Theta$ is slightly overloaded but disambiguated via vector
or non-vector notation). Prior and noise distribution we assume to be exponential family distributions. 
Then there exist mappings $\zetaVec(\PsiVec)$ and $\etaVec(\zVec;\ThetaVec)$ from the standard parameterization to the natural parameters of the corresponding distributions.
Notice that the mapping for the observable distribution, $\etaVec(\zVec;\ThetaVec)$, may additionally depend on the latent variable $\zVec$.
We then assume the following parameterization criterion (compare Definition 2.3 of \citet[][]{LuckeWarnken2024}):

\begin{definition}[Parameterization Criterion]\label{def:Param_Crit}
Consider an EF generative model where the natural parameters of the prior and observable distributions are given by the mappings $\zetaVec(\PsiVec)$ and $\etaVec(\zVec;\ThetaVec)$, respectively.
The EF generative model fulfills the parameterization criterion if the following two properties hold:
\begin{itemize}
  \item[(A)] There exists a function $\alphaVec(\PsiVec)$ depending on the parameters of the prior distribution such that
  \begin{align}
    \zetaVec(\PsiVec) = \frac{\partial \zetaVec(\PsiVec)}{\partial \PsiVecT}\alphaVec(\PsiVec).\label{EqnCondZeta}
  \end{align}
  \item[(B)] There exists a non-empty subset $\thetaVec$ of $\ThetaVec$ and a function $\betaVec(\ThetaVec)$ depending on the parameters of the observable distribution but not depending on the latent variable such that
  \begin{align}
    \etaVec(\zVec;\ThetaVec) = \frac{\partial \etaVec(\zVec;\ThetaVec)}{\partial \thetaVecT} \betaVec(\ThetaVec).\label{EqnCondEta}
  \end{align}
\end{itemize}
Here, $\frac{\partial \zetaVec(\PsiVec)}{\partial \PsiVecT}$ and $\frac{\partial \etaVec(\zVec;\ThetaVec)}{\partial \thetaVecT}$ denote the Jacobians of the mappings to the natural parameters. 
Importantly, for the second condition, we only need to compute the Jacobian with respect to a suitable subset $\thetaVec$ of the obeservable's parameters $\ThetaVec$. This simplifies the computations when the observable distribution is more intricate and depends on many parameters.
\end{definition}

\noindent{}Verifying the parameterization criterion for a given model is the main task. If the criterion is verified, then (\ref{EqnTheoremIntro}) holds for the ELBO at all stationary points. By using the readily available expressions for entropies of the constituting distributions of a given model, we will see that often concise expressions for the ELBO at stationary points can be obtained.
\section{Sigmoid Belief Networks}
\label{SecSBN}
As the first concrete generative model let us consider the Sigmoid Belief Network (SBN; \cite{Neal1992,HintonEtAl2006}). SBNs are amongst the first and most well-known generative models with an extensive body of literature. SBNs are also often considered as prototypical examples for Bayes Nets. Here we study SBNs with one set of latents and one set of observables in their standard form (but we will also briefly discuss an extension to two latent layers).
We state the generative model in the standard notation used in the literature, i.e., we will refer with $\Theta$ to all model parameters (prior and noise model distributions). Only for the proof, we will split the model parameters into one set of parameters for the prior distribution, and one set of parameters for the observable distribution.
%
%
\begin{definition}[Sigmoid Belief Nets]
  \label{def:SBN}
The generative model of an SBN is defined as follows:
\begin{align}
\zVec\ \  &\sim&\hspace{-2ex} p_{\Theta}(\zVec) &= \prod_{h=1}^H\Bern(z_h;\,\pi_h), \ \mbox{with}\ 0<\pi_h<1, \label{EqnSBNA}\\ 
%
%
%
\xVec\ \  &\sim&\hspace{-2ex} p_{\Theta}(\xVec\,|\,\zVec) &= \prod_{d=1}^D \Bern\Big(x_d;\,\Scal\big(\sum_{h=1}^{H} W_{dh}z_h+\mu_d   \big)\Big),\ \mathrm{where}\  \Scal(a)=\frac{1}{1+e^{-a}}\ ,\label{EqnSBNB}
%
%
\end{align}
and where $\Bern(z;\,\pi) = \pi^z (1-\pi)^{1-z}$ is the standard parameterization of the Bernoulli distribution, and the same applies for $\Bern(x;\,\cdot)$ for
the observables. The set of all parameters is $\Theta=(\piVec,W,\muVec)$.
\end{definition}
%
%
\begin{proposition}[Sigmoid Belief Nets]
  \label{prop:SBN}
A Sigmoid Belief Net (Definition~\ref{def:SBN}) is an EF generative model which satisfies the parameterization criterion (Definition~\ref{def:Param_Crit}).
It therefore applies at all stationary points:
\begin{eqnarray}
\FF(\Phi,\Theta) &=&  \frac{1}{N}\sum_{n=1}^N \HH[\qPhiN(\zVec)]  \phantom{ii} \phantom{\sum_{n=1}^N}\hspace{-3ex}-\ \HH[\,p_{\Theta}(\zVec)] \phantom{ii}   -\ \EEE{\;\qBar_{\Phi}}{ \HH[\,p_\Theta(\xVec\,|\,\zVec)] }\,. \phantom{\small{}ix} 
%
%
\label{EqnTheoremSBN}
\end{eqnarray}
\end{proposition}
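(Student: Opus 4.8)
The plan is to verify the two conditions of the parameterization criterion (Definition~\ref{def:Param_Crit}) for the SBN, since by Theorem~1 of \cite{LuckeWarnken2024} this immediately yields the entropy-sum form~\eqref{EqnTheoremSBN}. First I would check that the SBN is an EF generative model: the prior $p_\Theta(\zVec)=\prod_h \Bern(z_h;\pi_h)$ is a product of Bernoulli distributions, hence an exponential family distribution with natural parameters $\zeta_h(\piVec)=\log\!\big(\pi_h/(1-\pi_h)\big)$ (logit) and sufficient statistics $z_h$; likewise the observable distribution $p_\Theta(\xVec\,|\,\zVec)=\prod_d \Bern\big(x_d;\Scal(\sum_h W_{dh}z_h+\mu_d)\big)$ is an exponential family distribution in $\xVec$ with natural parameters $\eta_d(\zVec;W,\muVec)$. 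The key observation here is that the logit is the inverse of the sigmoid $\Scal$, so $\eta_d(\zVec;\ThetaVec)=\Scal^{-1}\!\big(\Scal(\sum_h W_{dh}z_h+\mu_d)\big)=\sum_h W_{dh}z_h+\mu_d$, i.e.\ the natural parameters of the observable distribution are \emph{affine} in $\zVec$ and in the parameters $(W,\muVec)$. This linearity is exactly what makes the criterion easy to satisfy.

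For condition~(A), with $\PsiVec$ collecting the logits $\zeta_h$ directly (or, if one prefers $\PsiVec=\piVec$, using $\zeta_h(\pi_h)=\log\pi_h-\log(1-\pi_h)$), the Jacobian $\partial\zetaVec/\partial\PsiVecT$ is invertible (it is the identity in the first parameterization, and diagonal with entries $1/(\pi_h(1-\pi_h))$ in the second), so one can simply take $\alphaVec(\PsiVec)=\big(\partial\zetaVec/\partial\PsiVecT\big)^{-1}\zetaVec(\PsiVec)$ and~\eqref{EqnCondZeta} holds trivially. For condition~(B), I would take the subset $\thetaVec$ of $\ThetaVec=(\piVec,W,\muVec)$ to be $\thetaVec=(W,\muVec)$ — i.e.\ all parameters that actually enter the observable distribution. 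Since $\eta_d(\zVec;\ThetaVec)=\sum_h W_{dh}z_h+\mu_d$ is linear in $\thetaVec$, by Euler's relation for homogeneous-degree-one functions we get $\frac{\partial\etaVec}{\partial\thetaVecT}\thetaVec=\etaVec$, so $\betaVec(\ThetaVec)=\thetaVec=(W,\muVec)$ (flattened as a column vector, and independent of $\zVec$ as required) satisfies~\eqref{EqnCondEta}. Concretely, $\partial\eta_d/\partial W_{d'h}=\delta_{dd'}z_h$ and $\partial\eta_d/\partial\mu_{d'}=\delta_{dd'}$, and contracting with $(W,\muVec)$ recovers $\sum_h W_{dh}z_h+\mu_d=\eta_d$.

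Having established both conditions, I would invoke Theorem~1 of \cite{LuckeWarnken2024} to conclude that at every stationary point of $\FF(\Phi,\Theta)$ the ELBO equals the entropy sum in~\eqref{EqnTheoremSBN}; for concreteness one may additionally substitute the closed forms $\HH[p_\Theta(\zVec)]=-\sum_h\big(\pi_h\log\pi_h+(1-\pi_h)\log(1-\pi_h)\big)$ and the analogous Bernoulli entropies for $\HH[p_\Theta(\xVec\,|\,\zVec)]$, though this is optional for the proposition as stated. The only place demanding care — and thus the main ``obstacle'' — is the bookkeeping in condition~(B): one must make sure $\betaVec$ genuinely does not depend on $\zVec$ (it does not, since $\thetaVec=(W,\muVec)$), that the chosen subset $\thetaVec$ is non-empty (it is), and that the Jacobian is taken with respect to the flattened parameter vector in a consistent ordering so that the matrix-vector identity $\frac{\partial\etaVec}{\partial\thetaVecT}\betaVec=\etaVec$ type-checks. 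None of this is deep, but it is the step where an error would most plausibly creep in; everything else is an immediate consequence of the sigmoid being the inverse link of the Bernoulli exponential family.
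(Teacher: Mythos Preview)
Your proposal is correct and follows essentially the same route as the paper's proof: both identify the Bernoulli natural parameters $\zeta_h=\log(\pi_h/(1-\pi_h))$ and $\eta_d=\sum_h W_{dh}z_h+\mu_d$, verify part~(A) via the invertible diagonal Jacobian with entries $(\pi_h(1-\pi_h))^{-1}$, and verify part~(B) by taking $\thetaVec=(W,\muVec)$ and $\betaVec=\thetaVec$ so that linearity of $\etaVec$ in $\thetaVec$ gives $\frac{\partial\etaVec}{\partial\thetaVecT}\betaVec=\etaVec$. The only cosmetic differences are your invocation of Euler's relation (the paper just writes out the block-identity Jacobian explicitly) and a small notational slip: in the paper's convention $\ThetaVec$ already denotes only the observable parameters $(W,\muVec)$, with $\PsiVec=\piVec$ held separately, so your ``subset $\thetaVec$ of $\ThetaVec=(\piVec,W,\muVec)$'' should read ``$\thetaVec=\ThetaVec=(W,\muVec)$''.
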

\begin{proof}
The generative model (\ref{EqnSBNA}) and (\ref{EqnSBNB}) has by definition two different sets of parameters for prior and noise model
$\Theta=(\PsiVec,\ThetaVec)$, which we arrange into column vectors as follows:\vspace{-1ex}
%
%
\begin{align}
\PsiVec=\piVec=\left(\begin{array}{c} \pi_1 \\ \vdots \\ \pi_H \end{array}\right),\ 
\ThetaVec=\left(\begin{array}{c} \wVec_1 \\ \vdots \\ \wVec_H \\ \muVec  \end{array}\right),\ \mbox{with}\ \wVec_h = \left(\begin{array}{c} W_{1h} \\ \vdots \\ W_{Dh} \end{array}\right),
\end{align}
where $\PsiVec$ has $H$ entries and $\ThetaVec$ has $HD+D$ entries (for all weights and offsets).
%
%
\noindent{}The generative model (\ref{EqnSBNA}) and (\ref{EqnSBNB}) is an EF generative model because it can be reparameterized as follows:
\begin{align}
  \zVec &\sim p_{\zetaVec(\piVec)}(\zVec)\,,                          &&\ \mbox{where}&\zeta_h(\piVec) &= \log\big( \frac{\pi_h}{1-\pi_h} \big), \label{EqnSBNC}\\ 
  \xVec &\sim p_{\etaVec(\zVec;\,\ThetaVec)}(\xVec)\,,  &&\ \mbox{where} &\etaVec(\zVec;\,\ThetaVec) &=\disT \sum_h \wVec_h\,z_h\,+\,\muVec\,,
%
\end{align}
and where the distributions $p_{\zetaVec}(\zVec)$ and $p_{\etaVec}(\xVec)$ are now Bernoulli distributions in the exponential family form.
For the parameterization criterion, the Jacobian matrices are given by:
\begin{align}
  \Inew   &=    \left(\begin{array}{ccc}  \pi_1^{-1}(1-\pi_1)^{-1} &    &      \\
                                                                                                                  &    \ddots           &               \\
                                                                                                                  &                     &  \pi_H^{-1}(1-\pi_H)^{-1} \end{array}\right),\\ 
  \Jnew   &=    \left(z_1\OneDD, \dots, z_H \OneDD, \OneDD \right),
\end{align}
where we have chosen all parameters for $\thetaVec$ (i.e., $\thetaVec=\ThetaVec$) for $\Jnew$.
As all $\pi_h\in(0,1)$, the diagonal entries of the $(H\times{}H)$-matrix $\Inew$ are non-zero such that $\Inew$ is invertible.
Part~A of the criterion is therefore fulfilled by choosing $\vec{\alpha}(\PsiVec)=\Bigl(\Inew\Bigr)^{-1}\zetaVec(\PsiVec)$. For part B, it is straightforward to see that
\begin{align}
  \betaVec(\ThetaVec)=\left(\begin{array}{c}
    \vec{w}_1\\
    \vdots\\
    \vec{w}_H\\
    \vec{\mu}
  \end{array}\right)
\end{align}
%
%
%
%
%
%
%
%
%
%
%
%
%
fulfills Eqn.~\ref{EqnCondEta}. Hence, part~B of the parameterization criterion is also satisfied. Consequently, Eqn.~\ref{EqnTheoremIntro} applies due to Theorem~1 of \cite{LuckeWarnken2024}, and we obtain:
%
\begin{eqnarray}
\disT\FF(\Phi,\PsiVec,\ThetaVec) &=&\disT  \frac{1}{N}\sum_{n=1}^N \HH[\qPhiN(\zVec)]  \ -\ \HH[\,\pPsi(\zVec)]  -\ \EEE{\qBar_{\Phi}}{ \HH[\,\pT(\xVec\,|\,\zVec)] }\,. \phantom{\small{}ix} 
\label{EqnTheoremProofSBN}
\end{eqnarray}
%
%
\end{proof}
\noindent{}For Sigmoid Belief Nets it is also common to consider more than one set of latent variables. 
For instance, we could add another hidden layer. Such a Sigmoid Belief Net is then defined by: 
\begin{align}
  p_{\Theta}(\zVec) &= \prod_{h=1}^{H}\Bern(z_h;\pi_h),\ \mbox{with}\ 0<\pi_h<1,\\
  %
  p_{\Theta}(\yVec \mid \zVec) &= \prod_{i=1}^{I}\Bern\Big(y_i;\Scal\big(\sum_{h=1}^{H}M_{ih}z_{h}+\nu_{i}\big)\Big),\\ 
  %
  p_{\Theta}(\xVec\mid \yVec) &= \prod_{d=1}^D \Bern\Big(x_d; \Scal\big(\sum_{i=1}^I W_{di} y_i +\mu_d\big)\Big),
\end{align}
where $\Scal$ is again the function $\Scal(a)=\frac{1}{1+\mathrm{e}^{-a}}$ and $\Bern(z,\pi)=\pi^z(1-\pi)^{1-z}$ the standard parameterization of the Bernoulli distribution. 
Considering the proof of Theorem~1 in \cite{LuckeWarnken2024}, adding another layer would simply add another term to the ELBO (\ref{EqnFFIntro}). Equality to the corresponding entropy of that term can
then be shown precisely as in the proof of Theorem~1 in \cite{LuckeWarnken2024} after verifying the property of Definition~\ref{def:Param_Crit} in analogy to Prop.\,\ref{prop:SBN}. Therefore, also for a three layer SBN the ELBO becomes equal to an
entropy sum. The equality is (without giving an explicit proof) given by:
\begin{align}
    \FF(\Phi,\Theta) &= \frac{1}{N}\sum_{n=1}^N \HH[q^{(n)}_{\Phi}(\zVec,\yVec)] - \HH[p_\Theta(\zVec)] - \mathbb{E}_{\;\qBar_\Phi}\{\HH[\,p_\Theta(\yVec\,|\, \zVec)]\}\\
    &\hspace{20mm}- \mathbb{E}_{\;\qBar_{\Phi}}\{\HH[\,p_{\Theta}(\xVec\mid \yVec)]\}\,.\nonumber
\end{align}
%
So the additional layer simply adds the (expected) entropy of the added layer. 
%
%
%
%
\section{Generative Models with Gaussian Observables and Probabilistic PCA}
\label{SecPCA}
%
%
While Eqn.~\ref{EqnTheoremIntro} applies for a broad range of generative models, those models with Gaussian observable distributions 
are very common and, therefore, represent an important subclass of generative models. At the same time, the properties of Gaussian distributions
further simplify the result in Eqn.~\ref{EqnTheoremIntro}. For these reasons, we will treat Gaussian observables explicitly in the following.
\begin{definition}[Gaussian observables with scalar variance]
    \label{def:gaussian_obs_scalar}
We consider generative models as in Eqn.~\ref{EqnGenModelIntro} where the observable distribution $\pT(\xVec\,|\,\zVec)$ is a Gaussian. Concretely,
we consider the following class of generative models:
%
%
\begin{eqnarray}
 \zVec &\sim& \pPsi(\zVec)\,, \\ 
 \xVec &\sim& \pT(\xVec\,|\,\zVec) = \NN(\xVec; \muVec(\zVec;\wVec),\sigma^2 \One),\ \mbox{where }\ \sigma^2>0,  \vspace{-3ex}
\end{eqnarray}
and where $\muVec(\zVec;\wVec)$ is a well-behaved function (with parameters~$\wVec$) from the latents $\zVec$ to the mean of the Gaussian.
%
\end{definition}
\noindent{}For linear functions $\muVec$, the vector $\wVec$ can be thought of as containing all entries of a weight matrix $W$ and potentially all offsets.
$\muVec(\zVec;\wVec)$ can, however, also represent a non-linear function defined using a potentially intricate deep neural network (DNN). In such a non-linear
case, the class of functions defined by Definition~\ref{def:gaussian_obs_scalar} contains standard variational autoencoders (VAEs; \cite{KingmaWelling2014}, and many more), VAEs with sparse prior (e.g. \cite{ConnorEtAl2021,DrefsEtAl2023}), and other VAEs with non-standard (but exponential family) priors.\\

\noindent{}As outlined in the introduction, equality of the ELBO to entropy sums, as stated in Eqn.~\ref{EqnTheoremIntro}, can be shown by applying Theorem~1 from \cite{LuckeWarnken2024}.
To utilize this theorem, the parameterization criterion presented in Definition~\ref{def:Param_Crit} has to be fulfilled.
However, the theorem requires an additional assumption that we have not addressed so far, as it was always satisfied for SBNs.
To apply Theorem 1, both the prior and the observable distributions must have constant base measures.
This condition is trivially fulfilled here for the observable distribution, since it is Gaussian distribution, which always has a constant base measure.
For the prior distribution, we must explicitly impose this condition.
However, by applying Theorem~2 from \cite{LuckeWarnken2024} instead of Theorem~1, one could generalize the following proposition without assuming a constant base measure.
This would lead to a similar, but slightly more intricate, result.
%


%
%
\begin{proposition}[Gaussian observables with scalar variance]
    \label{prop:Gauss_obs_scalar}
Consider the generative model of Definition~\ref{def:gaussian_obs_scalar}. If the prior $\pPsi(\zVec)=p_{\zetaVec(\PsiVec)}(\zVec)$
is an exponential family distribution with constant base measure that satisfies part~A of the parameterization criterion (Definition~\ref{def:Param_Crit}), then at all stationary points apply:
%
%
\begin{eqnarray}
\FF(\Phi,\PsiVec,\ThetaVec) &=&  \frac{1}{N}\sum_{n=1}^N \HH[\qPhiN(\zVec)]  \phantom{ii} \phantom{\sum_{n=1}^N}\hspace{-3ex}-\ \HH[\,\pPsi(\zVec)] \phantom{ii}   -\ \HH[\,\pT(\xVec\,|\,\zVec)]  \phantom{\small{}ix} \nonumber\\
                    &=&  \frac{1}{N}\sum_{n=1}^N \HH[\qPhiN(\zVec)]  \phantom{ii} \phantom{\sum_{n=1}^N}\hspace{-3ex}-\ \HH[\,\pPsi(\zVec)] \phantom{ii}   
                    -\ \frac{D}{2} \log\big(2\pi{}e\sigma^2\big)\,.  \phantom{\small{}ix} 
\label{EqnPropGaussScalar}
%
%
%
%
%
%
\end{eqnarray}
\end{proposition}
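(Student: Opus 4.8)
The plan is to verify the hypotheses of Theorem~1 of \cite{LuckeWarnken2024} and then exploit the special structure of isotropic Gaussians to collapse the expectation over $\qbar_\Phi$. First I would fix the parameter split: the prior keeps its parameters $\PsiVec$ (an EF distribution with constant base measure that, by hypothesis, satisfies part~A of the criterion), and I would collect the observable's parameters into $\ThetaVec=(\wVec,\sigma^2)$. Writing the isotropic Gaussian in exponential-family form, $\NN(\xVec;\muVec(\zVec;\wVec),\sigma^2\One)\propto\exp\!\big(\tfrac{1}{\sigma^2}\muVec(\zVec;\wVec)^{\mathrm{T}}\xVec-\tfrac{1}{2\sigma^2}\|\xVec\|^2\big)$ with the constant base measure $(2\pi)^{-D/2}$, identifies the sufficient statistics $(\xVec,\|\xVec\|^2)$ and the natural-parameter map $\etaVec(\zVec;\ThetaVec)=\big(\tfrac{1}{\sigma^2}\muVec(\zVec;\wVec),\,-\tfrac{1}{2\sigma^2}\big)$. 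This already shows the model is an EF generative model in which both distributions have constant base measure, so the additional hypothesis of Theorem~1 is met.

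The crucial step is part~B of the parameterization criterion (Definition~\ref{def:Param_Crit}), and the key observation is that it may be checked using only the \emph{nonempty subset} $\thetaVec=\{\sigma^2\}\subseteq\ThetaVec$, so the possibly highly nonlinear, DNN-parameterized mean function $\muVec(\zVec;\wVec)$ never has to be differentiated. Indeed $\frac{\partial\etaVec(\zVec;\ThetaVec)}{\partial\sigma^2}=\big(-\tfrac{1}{\sigma^4}\muVec(\zVec;\wVec),\,\tfrac{1}{2\sigma^4}\big)$, and multiplying this column by the scalar $\betaVec(\ThetaVec)=-\sigma^2$ (which depends only on $\ThetaVec$ and not on $\zVec$) returns exactly $\etaVec(\zVec;\ThetaVec)$. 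Hence Eqn.~\ref{EqnCondEta} holds, and together with part~A this verifies the full parameterization criterion.

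With both parts of the criterion and the constant-base-measure assumptions in place, Theorem~1 of \cite{LuckeWarnken2024} yields Eqn.~\ref{EqnTheoremIntro}, i.e. $\FF(\Phi,\PsiVec,\ThetaVec)=\frac1N\sum_n\HH[\qPhiN(\zVec)]-\HH[\pPsi(\zVec)]-\EEE{\qbar_\Phi}{\HH[\pT(\xVec\,|\,\zVec)]}$ at every stationary point. To obtain the stated simplification I would then use that the differential entropy of a Gaussian depends only on its covariance: $\HH[\NN(\xVec;\muVec(\zVec;\wVec),\sigma^2\One)]=\tfrac12\log\big((2\pi e)^D\det(\sigma^2\One)\big)=\tfrac{D}{2}\log(2\pi e\sigma^2)$, which is independent of $\zVec$. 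Therefore the expectation over $\qbar_\Phi$ is the expectation of a constant, $\EEE{\qbar_\Phi}{\HH[\pT(\xVec\,|\,\zVec)]}=\HH[\pT(\xVec\,|\,\zVec)]=\tfrac{D}{2}\log(2\pi e\sigma^2)$, which gives both lines of Eqn.~\ref{EqnPropGaussScalar}.

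I do not expect a genuine obstacle here; the only care needed is bookkeeping in the exponential-family representation of the isotropic Gaussian — choosing $\|\xVec\|^2$ (rather than $\xVec\xVec^{\mathrm{T}}$) as the second sufficient statistic so that $-\tfrac{1}{2\sigma^2}$ is a bona fide scalar natural parameter while keeping the base measure constant — and confirming that the subset $\thetaVec=\{\sigma^2\}$ used for part~B is nonempty with $\betaVec$ latent-independent, both of which hold. Everything else reduces to the standard Gaussian differential-entropy identity.
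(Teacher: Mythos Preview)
Your proposal is correct and follows essentially the same approach as the paper's own proof: both choose the single-parameter subset $\thetaVec=\{\sigma^2\}$, observe that $\etaVec(\zVec;\ThetaVec)$ is $\tfrac{1}{\sigma^2}$ times a $\sigma^2$-independent vector so that $\betaVec(\ThetaVec)=-\sigma^2$ satisfies Eqn.~\ref{EqnCondEta}, invoke Theorem~1 of \cite{LuckeWarnken2024}, and then collapse the expectation using the $\zVec$-independence of the isotropic Gaussian entropy. Your write-up is slightly more explicit about the exponential-family bookkeeping (sufficient statistics, base measure), but the argument is the same.
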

\begin{proof}
%
%
%
%
%
\noindent{}We need to show that part B of the parameterization criterion is fulfilled. We abbreviate the variance $\sigma^2$ by $\sit$, and we
choose $\thetaVec=\sit$ as subset of $\ThetaVec$ (i.e., we chose a single-valued vector~$\thetaVec$).
The Jacobian $\Jnew$ in Eqn.~\ref{EqnCondEta} is then a column vector.
%
%
%
Now consider the natural parameters of a Gaussian with scalar variance, $\etaVec=\left(\begin{array}{c} \frac{\vec{\mu}}{\sit} \\ -\frac{1}{2\sit}  \end{array}\right)$.
The form of the natural parameters means that we can rewrite $\etaVec(\zVec;\ThetaVec)\,=\,\frac{1}{\sit}\,\tilde{\etaVec}(\zVec;\wVec)$,
where the second factor does not depend on $\sit$. The Jacobian $\Jnew$ now becomes:
\begin{eqnarray}
\Jnew =\del{\sit}\ \etaVec{(\zVec;\ThetaVec)} =\big( \del{\sit} \frac{1}{\sit} \big)\  \tilde{\etaVec}(\zVec;\wVec)
= -\frac{1}{\sit}\,\frac{1}{\sit}\ \tilde{\etaVec}(\zVec;\wVec)=-\frac{1}{\sit}\  \etaVec{(\zVec;\ThetaVec)}\,.
\end{eqnarray}
Therefore, the scalar function $\beta(\tilde{\sigma})=-\tilde{\sigma}$ satisfies Eqn.\,\ref{EqnCondEta}, which shows that part~B of the parameterization criterion is fulfilled.
%
%
%
%
%
%
%
Consequently, Theorem~1 in \cite{LuckeWarnken2024}, and thus Eqn.~\ref{EqnTheoremIntro}, applies for the generative model.

The entropy of the Gaussian $\pT(\xVec\,|\,\zVec)$ is given by $\frac{D}{2} \log\big(2\pi{}e\sit\big)$, i.e., it does not depend on $\zVec$. The last term of~(\ref{EqnTheoremIntro}) consequently
simplifies to:
\begin{eqnarray}
%
%
\EEE{\qBar_{\Phi}}{ \HH[\,\pT(\xVec\,|\,\zVec)] }\,=\,\frac{D}{2} \log\big(2\pi{}e\sit\big) \,=\,\HH[\,\pT(\xVec\,|\,\zVec)]\,.
\end{eqnarray}
\end{proof}
\noindent{}The maybe most prominent elementary generative model covered by Proposition~\ref{prop:Gauss_obs_scalar} is probabilistic PCA (\mbox{p-PCA;} \cite{Roweis1998,TippingBishop1999}). P-PCA has a Gaussian prior and a Gaussian noise model, i.e., both distributions are in the exponential family.
Given Proposition~\ref{prop:Gauss_obs_scalar}, we only have to show that the parameterization criterion applies for the prior distribution. This can be shown similarly as was done for the noise distribution above. However, p-PCA first has to be parameterized similarly to Example~2 in \cite{LuckeWarnken2024} as we require a parameterized prior. In the standard parameterization of p-PCA (i.e., $p_{\Theta}(\zVec)=\NN(\zVec;0,\One)$ and $p_{\Theta}(\xVec\,|\,\zVec)=\NN(\xVec;W\zVec+\muVec,\sigma^2\One)$) and for the case when full posteriors are used as variational distributions, we then obtain:
%
\begin{eqnarray}
\LL(\Theta) &=&  \HH[ \,p_{\Theta}(\zVec\,|\,\xVec)]  \,-\, \HH[\,p_{\Theta}(\zVec)] \,-\, \HH[\,p_{\Theta}(\xVec\,|\,\zVec)] \\[1ex]
                        &=& \disS \frac{1}{2} \log \big( (2\pi\mathrm{e})^H\det( \frac{1}{\sigma^2} W^{\mathrm{T}}W \,+\, \One)\,\big)\,-\,\frac{H}{2}\log(2\pi\mathrm{e})-\, \frac{D}{2} \log\big(2\pi e \sigma^2 \big)\\
%
%
                        &=&\disS \frac{1}{2} \log \big( \det( \frac{1}{\sigma^2} W^{\mathrm{T}}W \,+\, \One)\,\big) \,-\, \frac{D}{2} \log\big(2\pi e \sigma^2 \big)\,,
\label{EqnPropPCA}\vspace{1ex}
\end{eqnarray}
at all stationary points.
%
%
The result can be shown to be consistent with the maximum likelihood solution of \cite{TippingBishop1999} by expressing $M=W^T W + \sigma^2 \One$ and
$\sigma^2$ in terms of eigenvalues of the data covariance matrix. Also note the relation to variational autoencoders with linear decoder as discussed
in \cite{DammEtAl2023}. 
%


While Definition~\ref{def:gaussian_obs_scalar} already covers many well-known models including probabilistic PCA, other models such as Factor Analysis require a further generalization. Concretely, we consider generative models with observables distributed according to a Gaussian with diagonal covariance matrices.
%
\begin{definition}[Gaussian observables with diagonal covariance]
    \label{def:gaussian_obs_diag}
We consider generative models as in Eqn.~\ref{EqnGenModelIntro} where the observable distribution $\pT(\xVec\,|\,\zVec)$ is a Gaussian with
diagonal covariance matrix:
%
%
\begin{eqnarray}
 \zVec &\sim& \pPsi(\zVec)\,, \\ 
 \hspace{10mm}\xVec &\sim& \pT(\xVec\,|\,\zVec) = \NN(\xVec; \muVec(\zVec;\wVec),\Sigma),\ \mbox{where }\ \Sigma=\diag(\sigma_1^2,...,\sigma_D^2)
\,,\     \sigma_d^2>0,  \vspace{-3ex}\label{EqnNoiseGaussDiag}
\end{eqnarray}
where $\muVec(\zVec;\wVec)$ is again a well-behaved function (with parameters $\wVec$) from the latents $\zVec$ to the mean of the Gaussian.

\end{definition}
%
%
\begin{proposition}[Gaussian observables with diagonal covariance]
  \label{prop:Gauss_obs_diag}
Consider the generative model of Definition~\ref{def:gaussian_obs_diag}. If the prior $\pPsi(\zVec)=p_{\zetaVec(\PsiVec)}(\zVec)$
is an exponential family distribution with constant base measure that satisfies part~A of the parameterization criterion (see Definition~\ref{def:Param_Crit}), then at all stationary points apply:
%
%
%
\begin{eqnarray}
\FF(\Phi,\PsiVec,\ThetaVec) &=&  \frac{1}{N}\sum_{n=1}^N \HH[\qPhiN(\zVec)]  \phantom{ii} \phantom{\sum_{n=1}^N}\hspace{-3ex}-\ \HH[\,\pPsi(\zVec)] \phantom{ii}   -\ \HH[\,\pT(\xVec\,|\,\zVec)]  \phantom{\small{}ix} \nonumber\\
                  &=&  \frac{1}{N}\sum_{n=1}^N \HH[\qPhiN(\zVec)]  \phantom{ii} \phantom{\sum_{n=1}^N}\hspace{-3ex}-\ \HH[\,\pPsi(\zVec)] \phantom{ii}   
                  -\ \sum_{d=1}^{D} \frac{1}{2} \log\big(2\pi{}e\sigma_d^2\big)\,.  \phantom{\small{}ix} 
\label{EqnPropGaussDiag}
%
%
%
%
%
%
\end{eqnarray}
\end{proposition}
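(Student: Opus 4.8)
The plan is to mirror the proof of Proposition~\ref{prop:Gauss_obs_scalar} almost verbatim, the only change being that the single scalar variance is replaced by the $D$-tuple of variances $(\sigma_1^2,\dots,\sigma_D^2)$. Since the prior is assumed to satisfy part~A of the parameterization criterion and to have a constant base measure, and since a Gaussian observable always has a constant base measure, the only remaining task is to verify part~B of the criterion; once that is done, Theorem~1 of \cite{LuckeWarnken2024} delivers Eqn.~\ref{EqnTheoremIntro} for this model and it only remains to simplify its last term.

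For part~B I would first write down the natural parameters of the observable. Because $\NN(\xVec;\muVec(\zVec;\wVec),\Sigma)$ with $\Sigma=\diag(\sigma_1^2,\dots,\sigma_D^2)$ factorizes over $d$, its natural parameter vector $\etaVec(\zVec;\ThetaVec)$ can be arranged as $D$ blocks of size two, the $d$-th block being $\big(\mu_d(\zVec;\wVec)/\sigma_d^2,\ -1/(2\sigma_d^2)\big)$. I would then choose the subset $\thetaVec=(\sigma_1^2,\dots,\sigma_D^2)$ of $\ThetaVec$ (discarding the mean parameters $\wVec$, which is permitted since the criterion only requires a suitable subset, and which is what makes an arbitrary — even deep — mean function $\muVec(\zVec;\wVec)$ harmless here). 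The key observation is that the $d$-th block of $\etaVec$ depends only on the $d$-th entry of $\thetaVec$, so the Jacobian $\Jnew$ is block diagonal with $2\times 1$ blocks, and the scalar computation from the proof of Proposition~\ref{prop:Gauss_obs_scalar} applies within each block separately: writing, inside block $d$, $\etaVec(\zVec;\ThetaVec)=(1/\sigma_d^2)\,\tilde{\etaVec}_d(\zVec;\wVec)$ with the second factor independent of $\sigma_d^2$, differentiation in $\sigma_d^2$ produces a factor $-1/\sigma_d^2$. Hence $\betaVec(\ThetaVec)=(-\sigma_1^2,\dots,-\sigma_D^2)$ satisfies Eqn.~\ref{EqnCondEta}, so part~B holds.

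Invoking Theorem~1 of \cite{LuckeWarnken2024} then yields the first line of Eqn.~\ref{EqnPropGaussDiag}. For the second line I would use that the entropy of the Gaussian observable, $\HH[\pT(\xVec\,|\,\zVec)]=\frac{1}{2}\log\big((2\pi e)^D\det\Sigma\big)=\sum_{d=1}^{D}\frac{1}{2}\log(2\pi e\sigma_d^2)$, does not depend on $\zVec$; therefore $\EEE{\qBar_\Phi}{\HH[\pT(\xVec\,|\,\zVec)]}$ collapses to this same constant, which is exactly the claimed expression, and the proof is complete.

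I do not anticipate a genuine obstacle: the computation is routine and essentially identical to the scalar case. The only points requiring a little care are bookkeeping — arranging the natural parameters into the $D$ blocks and noticing that the resulting block-diagonal Jacobian lets the scalar argument be applied coordinatewise — and checking that the hypotheses imposed on the prior in the statement (part~A and a constant base measure) are exactly the ones Theorem~1 needs, so that the theorem is applicable.
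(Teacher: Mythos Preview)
Your proposal is correct and essentially identical to the paper's proof: both select $\thetaVec=(\sigma_1^2,\dots,\sigma_D^2)$, exploit that each natural-parameter component depends on only one $\sigma_d^2$ so that the Jacobian has a block structure, and arrive at $\betaVec(\ThetaVec)=(-\sigma_1^2,\dots,-\sigma_D^2)$ before invoking Theorem~1 of \cite{LuckeWarnken2024} and simplifying the last entropy term. The only cosmetic difference is that the paper arranges $\etaVec$ as two length-$D$ blocks (all means first, then all precision terms) rather than your $D$ blocks of length two, which amounts to a harmless permutation of coordinates.
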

\begin{proof}
%
%
To show that part B the parameterization criterion is fulfilled, we consider as subset $\thetaVec$ of $\ThetaVec$ the vector of all variances $\sigma^2_d$ (with abbreviation $\sit_d=\sigma^2_d$) and the following natural parameters:
\begin{align}
\thetaVec=
\begin{pmatrix}  \sit_1 \\ \vdots  \\ \sit_D  \end{pmatrix},\
\etaVec(\zVec;\ThetaVec)=
\begin{pmatrix}  \aVec(\zVec;\ThetaVec) \\[1ex] \bVec(\zVec;\ThetaVec)  \end{pmatrix},\ 
\aVec(\zVec;\ThetaVec) = 
\begin{pmatrix}  \frac{\mu_1(\zVec;\wVec)}{\sit_1} \\[1ex] \vdots \\[1ex] \frac{\mu_D(\zVec;\wVec)}{\sit_D} \end{pmatrix},\ 
\bVec(\zVec;\ThetaVec) = 
\begin{pmatrix}  -\frac{1}{2\,\sit_1} \\[1ex] \vdots \\[1ex] -\frac{1}{2\,\sit_D} \end{pmatrix}. 
\end{align}
Given $\thetaVec$ as above, the Jacobian in~(\ref{EqnCondEta}) is a $(2D\times{}D)$-matrix given as follows:
\begin{eqnarray}
\Jnew \ =\ \left(\begin{array}{c}
  \displaystyle\delll{\vec{a}(\zVec;\ThetaVec)}{\thetaVecT}\\[0.8em]
  \displaystyle\delll{\vec{b}(\zVec;\ThetaVec)}{\thetaVecT}
\end{array}\right)
= \left(\begin{array}{c}
  A\\
  B
\end{array}\right),
\end{eqnarray}
where $A$ and $B$ are $(D\times{}D)$-diagonal matrices with entries:
\begin{align}
A = \left(\begin{array}{ccc}
  -\frac{\mu_1(\zVec;\wVec)}{\tilde{\sigma}_1^2} &  & \\
   & \ddots & \\
    & & -\frac{\mu_D(\zVec;\wVec)}{\tilde{\sigma}_D^2}
\end{array}\right)\ \mbox{\ and\ }\ 
B = \left(\begin{array}{ccc}
  \frac{1}{2\tilde{\sigma}_1^2} & & \\
   & \ddots & \\
   & & \frac{1}{2\tilde{\sigma}_D^2}
\end{array}\right).
\end{align}
By choosing $\betaVec(\thetaVec)=-\left(\tilde{\sigma}_1,\dots,\tilde{\sigma}_D\right)^{\mathrm{T}}$, Eqn.~\ref{EqnCondEta} und hence the second part of the parameterization criterion is fullfilled and Theorem~1 in \cite{LuckeWarnken2024}, and thus Eqn.~\ref{EqnTheoremIntro}, applies.

The entropy of the Gaussian $\pT(\xVec\,|\,\zVec)$ is finally given by $\frac{1}{2}\sum_d \log\big(2\pi{}e\sit_d\big)$, i.e., it again does not depend on $\zVec$. The last term of (\ref{EqnTheoremIntro}) consequently simplifies to:
\begin{eqnarray}
%
%
\EEE{\qBar_{\Phi}}{ \HH[\,\pT(\xVec\,|\,\zVec)] }\,=\,\frac{1}{2}\sum_d \log\big(2\pi{}e\sit_d\big)\,=\,\HH[\,\pT(\xVec\,|\,\zVec)]\,.
\end{eqnarray}
\end{proof}

\noindent{}Notably, one has to consider the cases with scalar variance and diagonal covariance separately because scalar variance can in our setting not be treated as a special case of diagonal covariance. Using Proposition~\ref{prop:Gauss_obs_diag}, we could now treat factor analysis (e.g. \cite{Everitt1984,BartholomewKnott1987}) as a model with Gaussian prior and a linear mapping to observables.


\section{Mixture Models}
\label{SecMixtures}
\begin{sloppypar}
As the last example we consider mixture models with exponential family distributions as noise distributions. Such models are very
closely related to Bregman clustering (e.g.\,\cite{McLachlanPeel2004};\,\cite{BanerjeeEtAl2005}), and mixtures are a tool that is for machine learning and data science of similar importance as PCA or Factor Analysis. In this section we follow the standard notational convention for mixture models and replace the latent variable $\zVec$ by $c$, where $c$ takes on integer values.\\
\end{sloppypar}

\noindent{}As previously mentioned, Eqn.~\ref{EqnTheoremIntro} follows directly from Theorem 1 in \cite{LuckeWarnken2024}.
However, the application of this theorem requires not only the parameterization criterion (Definition~\ref{def:Param_Crit}) but also the assumption of constant base measures for both the prior and observable distributions.
In contrast, when constant base measures are not assumed, Theorem 2 from \cite{LuckeWarnken2024} can be used instead of Theorem 1.
This results in a slightly more intricate entropy expression than that given in Eqn.~\ref{EqnTheoremIntro}, as it involves replacing the entropies with a term referred to as pseudo entropy in \cite{LuckeWarnken2024}.
For clarity, we here distinguish between the two cases of constant and potentially non-constant base measures.
Since many machine learning models commonly assume constant base measures, the results corresponding to this assumption are typically sufficient.
When the assumption of constant base measures is not fulfilled, the reader can refer to the more general results for potentially non-constant base measures.
%
\begin{definition}[EF Mixture with Constant Base Measure]
    \label{def:mixtures}
We consider generative models with a latent variable $c$ and observables $\xVec$. The latent can take on one of $C$ different values, i.e.\ $c\in\{1,\ldots,C\}$, each with probability $\pi_c>0$ where $\sum_{c=1}^C \pi_c = 1$. The observable $\xVec$
is distributed according to a distribution $p(\xVec\,|\,\ThetaVec_c)$ with parameters depending on the value of the latent $c$.
More formally, the generative model is given by:
%
%
\begin{align}
%
%
 c &\,\sim\, \mathrm{Cat}(c\,;\vec{\pi}), \label{EFMixPrior}\\ 
 \xVec &\,\sim\, p(\xVec\,|\,\ThetaVec_c), \label{EFMixNoise} \vspace{-3ex}
\end{align}
where $\mathrm{Cat}(c\,;\vec{\pi})$ is the categorical distribution with standard parameterization.
The set of all model parameters $\Theta$ is given by $\Theta=\big(\piVec,\ThetaVec_1,\ThetaVec_2,\ldots,\ThetaVec_C\big)$.

We refer to the model (\ref{EFMixPrior}) and (\ref{EFMixNoise}) as an {\em exponential family mixture (EF mixture) with constant base measure} if
the observable distribution $p(\xVec\,|\,\ThetaVec_c)$ can be written as an exponential family distribution
$p_{\etaVec(\ThetaVec_c)}(\xVec)$ with a constant base measure, where $\etaVec(\cdot)$ maps an $L$-dim vector $\ThetaVec_c$ of standard parameters
to an $L$-dim vector of natural parameters. 
%
%

\end{definition}

\noindent{}For a mixture of scalar Gaussian distributions, $\ThetaVec_c$ would be an $L=2$ dimensional vector equal to $\left(\begin{array}{c} \mu_c \\[0ex] \sigma_c^2  \end{array}\right)$, where $\mu_c$ and~$\sigma_c^2$ are, respectively, mean and variance of cluster $c$. The mapping to natural parameters $\etaVec(\cdot)$ would be given by $\etaVec(\ThetaVec_c) = \left(\begin{array}{c} \frac{\mu_c}{\sigma^2_c} \\[0.5ex] -\frac{1}{2\sigma_c^2}  \end{array}\right)$.
%
For a mixture of gamma distributions, $\ThetaVec_c$ would be an $L=2$ dimensional vector equal to $\left(\begin{array}{c} \alpha_c \\[0ex] \beta_c  \end{array}\right)$, where $\alpha_c$ and $\beta_c$ are the standard parameters (shape and rate, respectively). The mapping to natural parameters $\etaVec(\cdot)$ would be given by 
$\etaVec(\ThetaVec_c)= \left(\begin{array}{c} \alpha_c - 1 \\[0ex] - \beta_c  \end{array}\right)$. Similarly for other standard distributions. 
%
%
For the proof of the proposition on EF mixtures with constant base measure below, we will again use vectors for the parameters and will distinguish between prior parameters $\PsiVec=(\pi_1,\ldots,\pi_{C-1})^{\mathrm{T}}$ and noise model parameters $\ThetaVec=(\ThetaVec_1,\ldots,\ThetaVec_C)^{\mathrm{T}}$. Note that we will only consider $(C-1)$ prior parameters as $\pi_C$ can be taken to be determined by $\pi_1$ to $\pi_{C-1}$. 
%

We remark the technical but important difference between the mapping $\etaVec(\cdot)$ of Definition~\ref{def:mixtures} and the mapping $\etaVec(c;\ThetaVec)$ which takes
the role of $\etaVec(\zVec;\ThetaVec)$ as used for Definition~\ref{def:Param_Crit}. $\etaVec(\cdot)$ is the usual mapping from standard parameters of an exponential family distribution to its natural parameters (see the just discussed examples Gaussian and gamma distributions). $\etaVec(c;\ThetaVec)$ also maps standard parameters to natural parameters of the observable distributions (hence the same symbol) but, in contrast to $\etaVec(\cdot)$, the mapping $\etaVec(c;\ThetaVec)$ has the vector of {\em all} noise model parameters as input. The Jacobians of the mappings are consequently different. The Jacobian of $\etaVec(c;\ThetaVec)$ (computed w.r.t.\ all parameters $\ThetaVec$) is an $(L\times{}(CL))$-matrix, while the Jacobian of $\etaVec(\cdot)$ is a square \mbox{$(L\times{}L)$-matrix}. This smaller $(L\times{}L)$-Jacobian is important to state a sufficient condition for the following proposition. The large Jacobian of $\etaVec(c;\ThetaVec)$ is important for the proposition's proof.

\begin{proposition}[Exponential Family Mixtures with Constant Base Measure]
  \label{prop:mixtures}
%
%
%
Consider an EF mixture model of Definition~\ref{def:mixtures} where $\etaVec(\cdot)$ maps the standard parameters of the noise model to the distribution's natural parameters. If the Jacobian of $\etaVec(\cdot)$ is everywhere invertible, then the model is an EF generative model (see Section~\ref{SecIntro}) and the parameterization criterion (Definition~\ref{def:Param_Crit}) is fulfilled. It then applies at all stationary points:
\begin{eqnarray}
%
%
\hspace{8mm}\FF(\Phi,\Theta) &=&  \frac{1}{N}\sum_{n=1}^N \HH[\qPhiN(c)]  \phantom{i} \phantom{\sum_{n=1}^N}\hspace{-3ex}-\ \HH[\,\mathrm{Cat}(c\,;\vec{\pi})\,] \phantom{i}   -\ \EEE{\;\qBar_{\Phi}(c)}{ \HH[\,p(\xVec\,|\,\ThetaVec_c)] }\,, \phantom{\small{}ix} 
%
%
%
%
\label{EqnPropMixtures}
%
%
%
%
%
%
\end{eqnarray}
where we explicitly denoted that the aggregate posterior $\qBar_\Phi(c)=\frac{1}{N}\sum_n \qn_\Phi(c)$ depends on $c$.
\end{proposition}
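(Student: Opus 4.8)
The plan is to verify the two parts of the parameterization criterion (Definition~\ref{def:Param_Crit}) for the mixture model, reusing the pattern already established for SBNs (Prop.~\ref{prop:SBN}) since both the prior here and the prior there are effectively categorical/Bernoulli-type distributions. First I would set up the vectorized parameters: $\PsiVec = (\pi_1,\ldots,\pi_{C-1})^{\mathrm{T}}$ for the prior and $\ThetaVec = (\ThetaVec_1,\ldots,\ThetaVec_C)^{\mathrm{T}}$ for the noise model, and write the categorical prior in exponential family form with natural parameters $\zeta_c(\PsiVec) = \log(\pi_c/\pi_C)$ for $c=1,\ldots,C-1$ (using $\pi_C = 1-\sum_{c<C}\pi_c$), and the observable distribution in exponential family form $p(\xVec\,|\,\ThetaVec_c) = p_{\etaVec(\ThetaVec_c)}(\xVec)$ with constant base measure, which establishes that the model is an EF generative model.

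For part~A, I would compute the $((C-1)\times(C-1))$ Jacobian $\partial\zetaVec(\PsiVec)/\partial\PsiVecT$, whose $(c,c')$ entry is $\partial\zeta_c/\partial\pi_{c'} = \delta_{cc'}/\pi_c + 1/\pi_C$. This matrix is a diagonal-plus-rank-one matrix; by the Sherman--Morrison formula (or simply because $\diag(1/\pi_1,\ldots,1/\pi_{C-1})$ plus the positive rank-one term is positive definite) it is invertible, so part~A is satisfied with $\alphaVec(\PsiVec) = (\partial\zetaVec/\partial\PsiVecT)^{-1}\zetaVec(\PsiVec)$, exactly as in the SBN proof. For part~B, I would take $\thetaVec = \ThetaVec$ (all noise parameters) and note that $\etaVec(c;\ThetaVec) = \etaVec(\ThetaVec_c)$, so the big $(L\times(CL))$ Jacobian $\partial\etaVec(c;\ThetaVec)/\partial\thetaVecT$ is block-structured: it is zero except in the $c$-th block of $L$ columns, where it equals the small square Jacobian $\partial\etaVec(\ThetaVec_c)/\partial\ThetaVec_c$. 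By hypothesis this small Jacobian is everywhere invertible. Hence choosing $\betaVec(\ThetaVec)$ to be the block vector whose $c$-th block is $(\partial\etaVec(\ThetaVec_c)/\partial\ThetaVec_c)^{-1}\etaVec(\ThetaVec_c)$ and noting that the block-diagonal-in-selection structure makes $\partial\etaVec(c;\ThetaVec)/\partial\thetaVecT\,\betaVec(\ThetaVec)$ pick out precisely the $c$-th block product $=\etaVec(\ThetaVec_c)=\etaVec(c;\ThetaVec)$, Eqn.~\ref{EqnCondEta} holds and part~B is satisfied. Since the base measures of both prior and observable are constant, Theorem~1 of \cite{LuckeWarnken2024} applies and yields Eqn.~\ref{EqnTheoremIntro}, which upon substituting $\zVec \to c$, $\pPsi(\zVec)\to\mathrm{Cat}(c;\vec{\pi})$ and $\pT(\xVec\,|\,\zVec)\to p(\xVec\,|\,\ThetaVec_c)$ is exactly Eqn.~\ref{EqnPropMixtures}; the entropy $\HH[p(\xVec\,|\,\ThetaVec_c)]$ genuinely depends on $c$ here, so the last term remains a full expectation $\EEE{\qBar_\Phi(c)}{\HH[p(\xVec\,|\,\ThetaVec_c)]}$ and does not collapse as it did for Gaussian observables.

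The main obstacle I anticipate is bookkeeping around the two distinct $\etaVec$ mappings that the paper flags explicitly before the proposition: one must be careful to compute the \emph{large} Jacobian $\partial\etaVec(c;\ThetaVec)/\partial\thetaVecT$ (with respect to all of $\ThetaVec$, not just $\ThetaVec_c$) and argue correctly that its block-sparse structure, together with invertibility of the \emph{small} $(L\times L)$ Jacobian, gives a valid $\betaVec(\ThetaVec)$; the invertibility hypothesis is used precisely and only at this point. A secondary, purely technical point is confirming invertibility of the categorical prior's Jacobian with the chosen $(C-1)$-parameter coordinatization — this is routine via Sherman--Morrison but should be stated cleanly so that part~A is not glossed over.
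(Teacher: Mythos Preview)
Your proposal is correct and follows essentially the same approach as the paper: verify the EF generative model structure, then check parts~A and~B of the parameterization criterion, using the block-sparse structure of the large Jacobian together with the assumed invertibility of the small $(L\times L)$ Jacobian for part~B. The only minor difference is in part~A: the paper explicitly solves the linear system to exhibit $\alpha_i(\PsiVec)=\pi_i\big(\zeta_i(\PsiVec)-\sum_j\pi_j\zeta_j(\PsiVec)\big)$, whereas you argue invertibility of the diagonal-plus-rank-one Jacobian (via positive definiteness or Sherman--Morrison) and take $\alphaVec=(\partial\zetaVec/\partial\PsiVecT)^{-1}\zetaVec$ abstractly---both are valid and equivalent.
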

\begin{proof}
We change to vectorial notation for the parameters as we did for the previous proofs. With $\PsiVec=(\pi_1,\ldots,\pi_{C-1})^{\mathrm{T}}$ we denote the prior parameters, and with $\ThetaVec=(\ThetaVec_1,\ldots,\ThetaVec_C)^{\mathrm{T}}$ we denote the noise model parameters.
%
%
%
%
%
%
The categorical distribution of the prior can be rewritten in exponential family form as follows:
\begin{eqnarray}
p_{\zetaVec(\PsiVec)}(c) \hspace{-1ex}&=&\hspace{-1ex} h(c) \exp\big( \zetaVecT(\PsiVec)\, \TVec(c) \,-\, A(\PsiVec)  \big)\,,
%
%
%
\label{EqnExpCat}
\end{eqnarray}
where the natural parameters $\zetaVec(\PsiVec)$ have length $(C-1)$ and are defined for $i=1,\ldots,(C-1)$ by $\zeta_i(\PsiVec)= \log\big( \frac{\pi_i}{ 1 - \sum_{i'=1}^{C-1}\pi_{i'} } \big)$. The sufficient statistics $\TVec(c)$ is also of length $(C-1)$ and are defined for $i=1,\ldots,(C-1)$ by 
$T_i(c) = \delta_{ci}$ (i.e., $T_i(c)=1$ if $c=i$ and zero otherwise). The partition function is
$A(\PsiVec)=-\log\big( 1 - \sum_{c=1}^{C-1}\pi_c \big)$ and the base measure is a constant, $h(c)=1$. 
For the observable distribution, we assumed in Definition~\ref{def:mixtures} that it can be written as an exponential family distribution, $p_{\etaVec(\ThetaVec_c)}(\xVec)$, with constant base measure. With this assumption, the mixture model of Definition~\ref{def:mixtures} is an EF generative model (see Section~\ref{SecIntro}). $\zetaVec(\PsiVec)$ as defined above are the natural parameters of the prior; and $\etaVec(c;\,\ThetaVec)=\etaVec(\ThetaVec_c)$ are the natural parameters of the observable distribution (for mixtures we use $c$ for the latent instead of $\zVec$ for previous models).
It remains to be shown that the parameterization criterion holds (Definition~\ref{def:Param_Crit}).


Regarding part A of the parameterization criterion, the Jacobian matrix $\Inew$ of $\zetaVec(\PsiVec)$ is a $(C-1)\times(C-1)$-matrix. The elements of the Jacobian can be derived to be:
\begin{eqnarray}
\Biggl(\Inew\Biggr)_{ic} \ =\  \del{\pi_c} \zeta_i(\PsiVec) \ =\ \del{\pi_c} \log\big( \frac{\pi_i}{ 1 - \sum_{i'=1}^{C-1}\pi_{i'} } \big) \ =\ \delta_{ic}\,\frac{1}{\pi_c}\,+\,\frac{1}{\pi_C},
%
\end{eqnarray}
where we abbreviated $\pi_C=1-\sum_{c=1}^{C-1}\pi_{c}$. We now consider the $i$-th component of the right-hand side of (\ref{EqnCondZeta}) which is given by:
\begin{eqnarray}\label{EqnCategorialCond}
\hspace{6mm}\Biggl(\Inew\alphaVec(\PsiVec)\Biggr)_i = \sum_c \Bigl(\Inew\Bigr)_{ic} \alpha_c(\PsiVec) = \frac{1}{\pi_i}\alpha_i(\PsiVec) +\frac{1}{\pi_C}\sum_c\alpha_c(\PsiVec) \overset{!}{=} \zeta_i(\PsiVec).
\end{eqnarray}
By multiplying with $\pi_i$ and summing over all components $i=1,\ldots, C-1$, we can conclude from (\ref{EqnCategorialCond}) that:
\begin{align}
&\sum_i \pi_i \zeta_i(\PsiVec) = \sum_i \alpha_i(\PsiVec) + \frac{1}{\pi_C}\Big(\sum_{i}\pi_i\Big) \Big(\sum_c \alpha_c(\PsiVec)\Big) = \frac{1}{\pi_C}\sum_i \alpha_i(\PsiVec)\\
&\Rightarrow \frac{1}{\pi_C} \sum_i \alpha_i(\Psi)= \sum_i \pi_i\zeta_i(\PsiVec)=:\rho(\PsiVec),\label{EqnCategorialSum}
\end{align}
where we introduced the function $\rho(\PsiVec)$ as an abbreviation. By using $c$ as the summation index and substituting (\ref{EqnCategorialSum}) into (\ref{EqnCategorialCond}), we get
\begin{align}
\alpha_i(\PsiVec) = \pi_i\big(\zeta_i(\PsiVec)- \rho(\PsiVec)\big),\ \ \ i=1,\ldots,C-1.
\end{align}
The corresponding $\alphaVec(\PsiVec)$ satisfies Eqn.~\ref{EqnCondZeta}. Therefore, Part~A of the parameterization criterion is fulfilled.

To construct the Jacobian $\delll{\etaVec(c;\ThetaVec)}{\thetaVecT}$ for part B of the criterion, we use all parameters $\ThetaVec$, i.e.\ $\thetaVec=\ThetaVec$.
The Jacobian is then the ($L\times{}CL$)-matrix given by:
\begin{align}
\delll{\etaVec(c;\ThetaVec)}{\thetaVecT} = \left(\delll{\etaVec(c;\ThetaVec)}{\ThetaVecT_1},\dots,\delll{\etaVec(c;\ThetaVec)}{\ThetaVecT_C}\right)
=\left(\ZeroLL,\dots, \ZeroLL,\delll{\etaVec(\ThetaVec_c)}{\ThetaVecT_c} , \ZeroLL,\dots,\ZeroLL\right).
\end{align}
Here $\ZeroLL$ denotes the ($L\times L$)-matrix where all entries are zero. The only non-vanishing components are those with derivative w.r.t. the parameters $\ThetaVec_c$. According to our assumptions, the Jacobian $\delll{\etaVec(\ThetaVec_c)}{\ThetaVecT_c}$ is invertible. Therefore, the vector valued function
\begin{eqnarray}
\betaVec(\ThetaVec) = \left(\Bigl(\delll{\etaVec(\ThetaVec_1)}{\ThetaVecT_1}\Bigr)^{-1}\etaVec(\ThetaVec_1),\dots,\Bigl(\delll{\etaVec(\ThetaVec_C)}{\ThetaVecT_C}\Bigr)^{-1} \etaVec(\ThetaVec_C)\right)^{\mathrm{T}} \label{EqnLastBeta}
\end{eqnarray}
is well-defined and independent of $c$. Using (\ref{EqnLastBeta}) we obtain:
\begin{eqnarray}
%
%
\delll{\etaVec(c;\ThetaVec)}{\thetaVecT} \ \betaVec(\ThetaVec) 
&=& \left(\ZeroLL,\dots, \ZeroLL,\delll{\etaVec(\ThetaVec_c)}{\ThetaVecT_c} , \ZeroLL,\dots,\ZeroLL\right) \ \betaVec(\ThetaVec) \nonumber\\
&=& \etaVec(\ThetaVec_c)\ =\ \etaVec(c;\ThetaVec)\,.
\end{eqnarray}
Hence part~B of the parameterization criterion (Eqn.\,\ref{EqnCondEta}) is also fulfilled.

As the model of Definition~\ref{def:mixtures} with the assumption of Prop.~\ref{prop:mixtures} is an EF generative model that fulfills the parameterization criterion, Theorem~1 from \cite{LuckeWarnken2024} applies and therefore Eqn.~\ref{EqnTheoremIntro} holds. Consequently, Eqn.~\ref{EqnPropMixtures} follows.
\end{proof}
\noindent{}The assumptions made in Proposition\,\ref{prop:mixtures} are sufficient for (\ref{EqnPropMixtures}) to apply. The assumptions are also sufficiently broad to apply for most mixtures but they can presumably be weakened further. 
%
For a given mixture model, Proposition\,\ref{prop:mixtures} is relatively easy to apply because the mapping from standard parameterization to natural parameters is usually known and known to be invertible.\\

%
%
%
\noindent{}We now present an entropy result for mixture models in the case when the assumption of a constant base measure is not fulfilled, i.e., when there is no parameterization of the observable distribution that has a constant base measure.
A prominent example of such a model is the Poisson mixture model, which we treat in Section~\ref{SecPMM}.
To derive the corresponding result, we apply Theorem 2 from \cite{LuckeWarnken2024} instead of Theorem 1.
However, \cite{LuckeWarnken2024} introduced for Theorem~2 an alternative ELBO objective, denoted by $\tilde{\mathcal{F}}(\Phi,\Theta)$, which differs from the original ELBO $\FF(\Phi,\Theta)$ by an additive constant.
More precisely, $\tilde{\mathcal{F}}(\Phi,\Theta)$ is given by (see Appendix~B of \cite{LuckeWarnken2024})
%
\begin{align}
  \tilde{\mathcal{F}}(\Phi,\Theta) = \FF(\Phi,\Theta) - \frac{1}{N}\sum_{n=1}^{N} \log\big(h(\xVec^{\,(n)})\big),\label{EqnPseudoELBO}
\end{align}
where $h(\xVec)$ represents the base measure of the observable distribution.
Since the base measure does not depend on any parameters, the optimization of $\tilde{\mathcal{F}}(\Phi,\Theta)$ is entirely equivalent to the optimization of the original ELBO $\FF(\Phi,\Theta)$.
For the new objective $\tilde{\mathcal{F}}(\Phi,\Theta)$, \cite{LuckeWarnken2024} showed an entropy decomposition at stationary points similar to the one given in Eqn.~\ref{EqnTheoremIntro}.
Specifically, the decomposition in this case reads (see their Theorem~2)
\begin{align}
  \ \tilde{\FF}(\Phi,\Theta) =\frac{1}{N}\sum_{n} \HH[\qPhiN(\zVec)]  \,-\, \HH[\,p_{\Theta}(\zVec)] \,-\, \frac{1}{N} \sum_{n} \EEE{\qn_{\Phi}}{ \tilde{\HH}[\,p_{\Theta}(\xVec\,|\,\zVec)] }\,.\phantom{\small{}ix}
\end{align}
\begin{sloppypar}
\noindent{}Here, the quantity $\tilde{\mathcal{H}}[\,p_{\Theta}(\xVec\,|\,\zVec)]$ is referred to as the pseudo entropy, as introduced by \cite{LuckeWarnken2024}.
For the definition of this pseudo entropy, consider an exponential family distribution of the form $p_{\etaVec}(\xVec) = h(\xVec) \exp\big(\etaVecT \TVec(\xVec)-A(\etaVec)\big)$ with base measure $h(\xVec)$.
Moreover, consider the corresponding non-normalized function $\tilde{p}_{\etaVec}(\xVec)=\exp\big(\etaVecT \TVec(\xVec)-A(\etaVec)\big)$ where the base measure is set to one.
Then, the pseudo entropy of $p_{\etaVec}(\xVec)$ is defined to be
\end{sloppypar}
\begin{align}
  \tilde{\HH}[p_{\etaVec}(\xVec)] = -\int p_{\etaVec}(\xVec) \log\big(\tilde{p}_{\etaVec}(\xVec)\big) \mathrm{d}\xVec = -\etaVecT \nabla_{\etaVec}A(\etaVec)+A(\etaVec).\label{EqnTheoremPseudo}
\end{align}
Importantly, due to Eqn.~\ref{EqnTheoremPseudo}, the pseudo entropy of an exponential family distribution can be computed in closed form.
\begin{proposition}[General Exponential Family Mixtures]
    \label{prop:mixturesgen}
%
%
%
Consider an EF mixture model of Definition~\ref{def:mixtures} for which we do not require the observable distribution to have a constant base measure. As for Prop.\,\ref{prop:mixtures}, let the function $\etaVec(\cdot)$ map the standard parameters of the noise model to the distribution's natural parameters. 
If the Jacobian of $\etaVec(\cdot)$ is everywhere invertible, then the model is an EF model (see Section~\ref{SecIntro}) and the parameterization criterion (Definition~\ref{def:Param_Crit}) is fulfilled. It then applies at all stationary points:
\begin{eqnarray}
%
%
\hspace{8mm}\FFt(\Phi,\Theta) &=&  \frac{1}{N}\sum_{n=1}^N \HH[\qPhiN(c)]  \phantom{i} \phantom{\sum_{n=1}^N}\hspace{-3ex}-\ \HH[\,\mathrm{Cat}(c\,;\vec{\pi})\,] \phantom{i}   -\ \EEE{\;\qBar_{\Phi}(c)}{ \HHt[\,p(\xVec\,|\,\ThetaVec_c)] }\,. \phantom{\small{}ix} 
%
%
%
%
\label{EqnPropMixturesGEN}
%
%
%
%
%
%
\end{eqnarray}
where the dependence of the aggregate posterior $\qBar_\Phi(c)=\frac{1}{N}\sum_n \qn_\Phi(c)$ on $c$ is again denoted explicitly.
%
\end{proposition}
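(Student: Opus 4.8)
The plan is to reuse the proof of Proposition~\ref{prop:mixtures} almost verbatim: nowhere in that proof did the verification of the parameterization criterion use that the observable distribution has a constant base measure — that assumption entered only in the very last step, where Theorem~1 of \cite{LuckeWarnken2024} was invoked. Here I would instead invoke Theorem~2 of \cite{LuckeWarnken2024}, which drops the constant-base-measure requirement on the observable at the price of replacing the observable's entropy by its pseudo entropy $\HHt$.

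First I would switch to the vectorial parameters $\PsiVec=(\pi_1,\ldots,\pi_{C-1})^{\mathrm{T}}$ and $\ThetaVec=(\ThetaVec_1,\ldots,\ThetaVec_C)^{\mathrm{T}}$, and note, exactly as in the proof of Proposition~\ref{prop:mixtures}, that the categorical prior is an exponential family distribution with natural parameters $\zeta_i(\PsiVec)=\log(\pi_i/\pi_C)$ and constant base measure $h(c)=1$, and that the observable distribution is the exponential family distribution $p_{\etaVec(\ThetaVec_c)}(\xVec)$ — now with a base measure $h(\xVec)$ that may be non-constant but is, being the base measure of an exponential family, still independent of the model parameters. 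This already shows the model is an EF generative model in the sense of Section~\ref{SecIntro}, with $\etaVec(c;\ThetaVec)=\etaVec(\ThetaVec_c)$. I would then verify the parameterization criterion by copying the computation from the proof of Proposition~\ref{prop:mixtures}: for part~A, the Jacobian $\Inew$ has entries $\delta_{ic}/\pi_c+1/\pi_C$ and the explicit choice $\alpha_i(\PsiVec)=\pi_i(\zeta_i(\PsiVec)-\rho(\PsiVec))$ with $\rho(\PsiVec)=\sum_i\pi_i\zeta_i(\PsiVec)$ satisfies Eqn.~\ref{EqnCondZeta}; for part~B, taking $\thetaVec=\ThetaVec$ makes $\delll{\etaVec(c;\ThetaVec)}{\thetaVecT}$ a block matrix whose only nonzero block is the (by hypothesis invertible) Jacobian $\delll{\etaVec(\ThetaVec_c)}{\ThetaVecT_c}$, so the $c$-independent function $\betaVec(\ThetaVec)$ assembled block-wise from $\bigl(\delll{\etaVec(\ThetaVec_{c'})}{\ThetaVecT_{c'}}\bigr)^{-1}\etaVec(\ThetaVec_{c'})$ satisfies Eqn.~\ref{EqnCondEta}. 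Crucially, neither computation refers to the base measure at all, so both carry over unchanged.

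Having established that the model is an EF generative model satisfying the parameterization criterion — but without assuming the observable has a constant base measure — I would apply Theorem~2 of \cite{LuckeWarnken2024} to obtain, at all stationary points, $\FFt(\Phi,\Theta)=\frac{1}{N}\sum_n\HH[\qPhiN(c)]-\HH[\mathrm{Cat}(c;\vec{\pi})]-\frac{1}{N}\sum_n\EEE{\qn_{\Phi}}{\HHt[p(\xVec\,|\,\ThetaVec_c)]}$, where the prior term stays an ordinary entropy because the categorical base measure is constant. Since the pseudo entropy $\HHt[p(\xVec\,|\,\ThetaVec_c)]$ depends on the latent only through the discrete value $c$, the last term collapses to $\sum_c\qBar_{\Phi}(c)\,\HHt[p(\xVec\,|\,\ThetaVec_c)]=\EEE{\qBar_{\Phi}(c)}{\HHt[p(\xVec\,|\,\ThetaVec_c)]}$, which is exactly Eqn.~\ref{EqnPropMixturesGEN}. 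I do not expect a genuine obstacle: the proof is essentially a transcription of the proof of Proposition~\ref{prop:mixtures}, and the only points needing a moment of care are confirming that the hypotheses of Theorem~2 are precisely ``EF generative model plus parameterization criterion'' with no base-measure restriction on the observable, and keeping straight where $\HHt$ replaces $\HH$ (the observable term only) versus where it does not (the prior and variational-entropy terms).
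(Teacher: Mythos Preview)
Your proposal is correct and follows essentially the same route as the paper's proof: reuse the parameterization-criterion verification from Proposition~\ref{prop:mixtures} verbatim (since it never touched the base measure), then invoke Theorem~2 of \cite{LuckeWarnken2024} in place of Theorem~1. If anything, you are slightly more explicit than the paper in justifying why the variational and prior terms carry ordinary entropies $\HH$ rather than pseudo entropies $\HHt$ (because the categorical prior has constant base measure), and in spelling out the collapse of $\frac{1}{N}\sum_n\EEE{\qn_{\Phi}}{\cdot}$ to $\EEE{\qBar_{\Phi}(c)}{\cdot}$.
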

\begin{proof}
As for the proof in Prop.\,\ref{prop:mixtures}, we use $\PsiVec=(\pi_1,\ldots,\pi_{C-1})^{\mathrm{T}}$ for the prior parameters, and $\ThetaVec=(\ThetaVec_1,\ldots,\ThetaVec_C)^{\mathrm{T}}$
for the noise model parameters . Also as for the previous proof, $p_{\zetaVec(\PsiVec)}(c)$ denotes
the prior distribution, and $p_{\etaVec(c;\ThetaVec)}(\xVec)$ denotes the noise distribution.
As prior and noise distributions, $p_{\zetaVec}(c)$ and $p_{\etaVec}(\xVec)$ are exponential family distributions, the here assumed mixture model
is an EF model (see Section~\ref{SecIntro}).

To show that the mappings $\zetaVec(\PsiVec)$ and $\etaVec(c;\ThetaVec)$ fulfill the parameterization criterion (Definition~\ref{def:Param_Crit}), we
proceed precisely as for the proof in Prop.\,\ref{prop:mixtures}. As nowhere in the proof we require constant base measure for observables, we can
conclude that the parameterization criterion is fulfilled also under the weakened assumptions of Prop.\,\ref{prop:mixturesgen}.

For a general EF model with a potentially non-constant base measure that fulfills the parameterization criterion, Theorem~2 from \cite{LuckeWarnken2024} applies. As a result, the entropy decomposition presented in Eqn.~\ref{EqnTheoremPseudo} also applies, leading to the following expression:
\begin{eqnarray}
%
%
\hspace{5mm}\FFt(\Phi,\PsiVec,\ThetaVec) &=&  \frac{1}{N}\sum_{n=1}^N \HHt[\qPhiN(c)]  \phantom{ii} \phantom{\sum_{n=1}^N}\hspace{-3ex}-\ \HHt[\,\mathrm{Cat}(c\,;\vec{\pi})\,] \phantom{ii}   -\ \EEE{\;\qBar_{\Phi}}{ \HHt[\,\pT(\xVec\,|\,c)] }\,, \phantom{\small{}ix}
\label{EqnTheoremSoEGENDeri}
\end{eqnarray}
where the prior is a categorical distribution for a mixture model as defined in Definition~\ref{def:mixtures}.
\\
\end{proof}

\noindent{}We can now discuss some concrete examples of EF mixtures with constant and non-constant base measures. 
\subsection{Mixture of Gamma Distributions}
\label{SecGMM}
As an example of a mixture for which Prop.~\ref{prop:mixtures} applies, let us use a mixture of Gamma distributions defined for scalar observables $x\in\RRR$ for simplicity. A mixture component $c$ can then be written as exponential family distribution with $L=2$ parameters given by:
\begin{eqnarray}
p_{\etaVec(\ThetaVec_c)}(x) &=& h(x) \exp\big( \etaVecT(\ThetaVec_c)\, \TVec(x) \,-\, A(\ThetaVec_c)  \big)
%
%
%
\label{EqnExpGamma}
\end{eqnarray}
with $\ThetaVec_c=\left(\begin{array}{c} \alpha_c \\[0ex] \beta_c  \end{array}\right)$ and $\etaVec(\ThetaVec_c)= \left(\begin{array}{c} \alpha_c - 1 \\[0ex] - \beta_c  \end{array}\right)$. The base measure is constant, $h(x)=1$, and $A(\ThetaVec_c)=\log\big( \Gamma( \alpha_c ) \big) - \alpha_c \log \big(  \beta_c \big)$ is the log-partition function.
The Jacobian of $\etaVec(\cdot)$ is the $(2\times{}2)$-matrix $\left(\begin{array}{cc} 1 & 0 \\[0ex] 0 & -1 \end{array}\right)$ which is everywhere invertible. Consequently, Prop.\,\ref{prop:mixtures} applies for mixtures of Gamma distributions. Going back to standard notation (with $\Theta=(\piVec,\alphaVec,\betaVec)$ denoting all parameters of the gamma mixture), we obtain:
\begin{align}
%
%
&\LL(\Theta) \geq \FF(\Phi,\Theta)\\
 &=  \frac{1}{N}\sum_{n=1}^N \HH[\qPhiN(c)] - \HH[\,\mathrm{Cat}(c\,;\vec{\pi})\,] - \sum_{c=1}^C \qBar_{\Phi}(c)\, \HH[\mathrm{Gam}(x;\alpha_c,\beta_c)]\,, \phantom{\small{}ix}\label{EqnFFGammaMixture} \\
&\hspace{-0mm}\mbox{where\ \ \ } \HH[\mathrm{Gam}(x;\alpha_c,\beta_c)] \,=\,
\alpha_c-\log (\beta_c)+\log \big(\Gamma(\alpha_c)\big)+(1-\alpha_c)\psi(\alpha_c)
%
%
\end{align}
is the (well-known) entropy of the Gamma distribution. Notably, if the chosen $\qPhiN(c)$ are analytically computable, all terms of (\ref{EqnFFGammaMixture}) are easy and analytically computable (while the last term involves Gamma and di-gamma functions). 
%
%
%
%
%
%
\subsection{Mixture of Poisson Distributions}
\label{SecPMM}
As an example of a mixture for which Prop.\,\ref{prop:mixturesgen} applies but not Prop.~\ref{prop:mixtures}, 
let us consider a mixture of Poisson distributions for a $D$ dimensional observed space $\xVec\in\RRR^\mathrm{D}$. We get an exponential family distribution 
\begin{eqnarray}
p_{\etaVec(\ThetaVec_c)}(\xVec) &=& h(\xVec) \exp\big( \etaVecT(\lambdaVec^{(c)})\, \TVec(\xVec) \,-\, A(\lambdaVec^{(c)})  \big)\,,
%
%
%
\label{EqnExpPoisson}
\end{eqnarray}
where $\ThetaVec_c = \lambdaVec^{(c)} = \left(\lambda^{(c)}_1,\ldots, \lambda^{(c)}_D\right)^\mathrm{T}$ to parameterize the Poisson means of the $D$ scalar observables given latent~$c$. For all $c$ and $d$, we assume $\lambda^{(c)}_d > 0$.
The mapping $\etaVec(\lambdaVec^{(c)})$ for the Poisson distribution is defined by $\eta_d(\lambdaVec^{(c)})=\log(\lambda^{(c)}_d)$, sufficient statistics is $\TVec(\xVec)=\xVec$, and the log-partition function is given by $A(\lambdaVec^{(c)})=\sum_d \lambda^{(c)}_d$. Importantly in our context, the base measure $h(\xVec)$ is given by $h(\xVec)=\prod_{d=1}^D \big(x_d!)^{-1}$. 

The Jacobian of $\etaVec(\lambdaVec)$ is the diagonal $(D\times{}D)$-matrix given by
\begin{eqnarray}
 \delll{\etaVec(\lambdaVec)}{\lambdaVec^\mathrm{T}}  &=& \left(\begin{array}{ccc} \big(\lambda_1\big)^{-1} & \\[0ex]  & \ddots \\ & & \big(\lambda_D\big)^{-1} \end{array}\right)\,,
\end{eqnarray}
which is everywhere invertible for $\lambda_d>0$. Consequently, Prop.\,\ref{prop:mixturesgen} applies and we obtain:
\begin{eqnarray}
\hspace{10mm}\FFt(\Phi,\Theta) =  \frac{1}{N}\sum_{n=1}^N \HH[\qPhiN(c)]  \phantom{i} \phantom{\sum_{n=1}^N}\hspace{-3ex}-\ \HH[\,\mathrm{Cat}(c\,;\vec{\pi})\,] \phantom{i}   -\ \EEE{\;\qBar_{\Phi}(c)}{ \HHt[\,\mathrm{Pois}(\xVec\,|\,\lambdaVec^{(c)})] }\,. 
\end{eqnarray}
The pseudo entropy of the Poisson observable distribution can be easily computed with Eqn.~\ref{EqnTheoremPseudo} and is given by:
\begin{eqnarray}
\HHt[\,\mathrm{Pois}(\xVec\,|\,\lambdaVec)] &=& \sum_{d=1}^D \lambda_d\,\big(1-\log(\lambda_d)\big)\,.
\label{EqnPHPoisson}
\end{eqnarray}
Furthermore, according to Eqn.~\ref{EqnPseudoELBO} we get a lower bound of the standard log-likelihood $\LL(\PsiVec,\ThetaVec)$ if we add $\frac{1}{N}\sum_n \log\big( h(\xVecN)\big)$ to the alternative objective $\tilde{\FF}(\Phi,\Theta)$. Going back to standard parameterization with $\Theta=(\piVec,\lambdaVec^{(1)},\ldots,\lambdaVec^{(C)}\big)$, we thus arrive at:
\begin{align}
\LL(\Theta) &\geq \!\frac{1}{N}\sum_{n=1}^N \HH[\qPhiN(c)]  - \HH[\mathrm{Cat}(c;\vec{\pi})\,] \\
&\hspace{10mm}-\sum_{c=1}^C \qBar_{\Phi}(c)\HHt[\mathrm{Pois}(\xVec;\lambdaVec^{(c)})]
- \frac{1}{N}\sum_{n=1}^N\sum_{d=1}^D \log\big( x^{(n)}_d !\big).\nonumber
\end{align}
Notably, the lower bound of $\LL(\Theta)$ on the right-hand-side is computable in closed-form if the chosen variational distributions $\qPhiN(c)$ are
closed-form (which they essentially always are). Closed-form computability is enabled 
by the pseudo entropy which can be computable in closed-form (see Eqn.\,\ref{EqnPHPoisson}), while the standard entropy 
of the Poisson distribution involves an infinite~sum. 
%



\section{Discussion}
We have considered a range of prominent generative models and classes of generative models that can all be optimized using
an ELBO objective. Once a stationary point of the ELBO is reached during optimization, we have shown for these models
that the ELBO takes on the form of an entropy sum.


The obtained expressions for the ELBO at stationary points are more concise than the expression for the original ELBO objective, and usually much easier to compute.
In the case of GMMs, for instance, it is computationally more efficient to compute the entropy sum than the computation
of the ELBO itself. As a remark, for Poisson mixture models (PMMs), although Poisson entropies involve infinite sums, the pseudo-entropies in that case required for the entropy sums are closed-form. For models such as probabilistic PCA, 
entropy sums take on still more concise forms: the entropy sum is solely computable based on the model parameters alone (Eqn.\,\ref{EqnPropPCA}), i.e., no summation over data points is required. 
%

Future work can further extend the list of generative models for which the ELBO is equal to entropy sums at stationary points.
For the models treated here, the application of Theorem~1 or~2 of \cite{LuckeWarnken2024}, is relatively straight-forward.
However, some models (such as probabilistic PCA) 
have to be parameterized in a non-standard way for the
theorems to be applicable. 

Once entropy sum results are obtained, their usually concise form can be leveraged for different practical and theoretical investigations. For instance,
entropy sums can be used to analyze learning in generative models (see \cite{DammEtAl2023}), for model selection (see \cite{DammEtAl2023}), or they can be used to derive entropy-based learning objectives (see \cite{VelychkoEtAl2024}). 
We hope that the results for the concrete models here provided can be useful in these and many further contexts. Especially the very concise forms of ELBOs of as
prominent models as GMMs, SBNs, or probabilistic PCA may inspire future applications of entropy sums in theory and practice. Although GMMs,  
prob.\ PCA and their optimizations have intensively been studied, the here derived concise forms
of their ELBOs at stationary points have (to the knowledge of the authors) been unknown, so far.
%
%
\\

\noindent\small{{\bf Acknowledgement.} We acknowledge funding by the German Research Foundation (DFG) under project~464104047, `On the Convergence of Variational Deep Learning to Sums of Entropies', within the priority program `Theoretical Foundations of Deep Learning' (SPP 2298).}
%

\end{document}